\newtheorem{theorem}{Theorem}[section]
\newtheorem{lemma}[theorem]{Lemma}
\newcommand{\norm}[1]{\left\lVert#1\right\rVert}
\theoremstyle{definition}
\newtheorem{definition}{Definition}[section]
\theoremstyle{remark}
\newcommand{\given}{\,|\,}
\DeclareMathOperator*{\argmin}{arg\,min}
\title{First Order Methods take Exponential Time to Converge to Global Minimizers of Non-Convex Functions}
\author[1]{Krishna Reddy Kesari}
\author[2]{Jean Honorio}
\affil[1]{Department of Electrical and Computer Engineering, Purdue University}
\affil[2]{Department of Computer Science, Purdue University}
\begin{document}

\maketitle

\begin{abstract}
Machine learning algorithms typically perform optimization over a class of non-convex functions. In this work, we provide bounds on the fundamental hardness of identifying the global minimizer of a non convex function. Specifically, we design a family of parametrized non-convex functions and employ statistical lower bounds for parameter estimation. We show that the parameter estimation problem is equivalent to the problem of function identification in the given family. We then claim that non convex optimization is at least as hard as function identification. Jointly, we prove that any first order method can take exponential time to converge to a global minimizer.   
\end{abstract}

\section{Introduction}
\label{intro}
Deep learning algorithms generally employ first order optimization techniques that have convergence rates established only for convex functions. However, the function classes that they model happen to be non convex in nature. Non convex functions have epigraphs that are not convex sets. In other words, they do not exhibit the characteristic of having a single local minimum that also serves as its global minimum. Instead, they may have multiple local and multiple global minima. Although identifying global minimizers is not the goal of learning algorithms, it has motivated recent work for a better understanding of non convex optimization. Specifically, there has been emphasis on saddle points and local minimizers as understanding the geometry around these points may enable better algorithms to scan the search space. It was first shown that gradient descent converges to minimizers if the strict saddle property is satisfied \cite{jmlr_lee16}. Subsequent work has shown that gradient descent with a constant step size can take exponential time to escape saddle points \cite{Du2017}. On the other hand, perturbed gradient descent is capable of escaping these saddle points under some technical assumptions including smoothness, Lipschitz Hessian, strict saddle points, among others \cite{jin}. While this line of work attempts to provide guarantees under specific conditions, it is difficult in most machine learning scenarios to apriori understand which of these conditions are satisfied. In this work, we take an orthogonal approach to understanding the fundamental hardness of spanning the search space using first order methods, which we refer to as identifying the global minimizer. 

Global lower bounds have previously been established for the "non adaptive" case where the set of query points are fixed apriori and there exists a bijection from $T$ points to an algorithm \cite{NY83, W89} (for further details, please see Section \ref{AppA1}). In contrast, we consider the "adaptive" case. For instance, where $y_t$ depends on $y_1...,y_{t-1}$ where the bijection does not exist. Besides being adaptive, our lower bounds are a lot tighter, $O(2^d n^{-1/2})$ versus $O(n^{-1/d})$ \cite{NY83} with exponential in the latter holding only for an $\epsilon$-accuracy with $\epsilon<1$ while ours holds for any $\epsilon$ (for further details, please see Section \ref{AppA2}).

Algorithm agnostic lower bounds for the adaptive case typically involve the use of tools such as circuit complexity, Fano's, LeCam's, etc. Such an approach provides an overarching framework that incorporates any possible first order algorithm. The algorithm could be as simple as gradient descent that decides its next point just based on the gradient at its previous point or as complex as using the gradient information from all the points that the algorithm has traversed. Previous work on evaluating lower bounds for stochastic convex optimization has employed a similar framework \cite{Agrawal}. Related work on the non convex front provides conditions under which adaptive local non-convex minimax results in better local convergence rates. The local minimax formulation restricts functions close to an apriori reference function \cite{Singh18}. In contrast, to the best of our knowledge, we are the first to provide adaptive global non-convex minimax lower bounds (for further details, please see Section \ref{AppA3}).. 

In this work, we formulate a parametrized subclass of non convex functions. The parametrization defines the position and the depth of the minimizers of each function. We employ classical statistical minimax bounds to the parameter estimation. Next, we claim that non convex optimization, also viewed as a set reconstruction problem, over the subclass is at least as hard as the parameter estimation. These naturally boil down to prove that it can take exponential time for first order methods to converge to any global minimizer of a non convex function.     

\section{Preliminaries}
In this section, we describe the mapping of non convex optimization to an oracle framework. We define the oracle, its properties and the error for any optimization method $\mathcal{M}$ on a function $f$ at any time step or query to the oracle. We also provide a brief discussion on the mapping of the general minimax risk framework to our setting before concluding with an intuitive picture of the function class of interest. For the convenience of the reader, we present an overview of our notations. 
\subsection{Notation}
We use $\mathcal{S}$ to denote a set of $d$-dimensional points, each denoted by $x \in \mathcal{S}$. We use ${||x||}_p$ to denote the $\ell_p$ norm for $p \in [1,\infty]$. For any two distributions $\mathbb{P}$ and $\mathbb{Q}$, we represent the Kullback-Leibler (KL) divergence between the distributions by $\mathbb{KL}(\mathbb{P}||\mathbb{Q})$. We use $\mathbb{I}$ to denote the Iverson function, i.e., $\mathbb{I}(A)$ refers to a random variable that takes a 0-1 value conditioned on set $A$.  
\subsection{Stochastic first order oracles}
A (stochastic) first order oracle, when queried at a point in its domain, returns the (noisy) function value and the (noisy sub) gradient at that point. We define general first order oracles $\mathcal{O}_{p, \sigma}$ below.  
\begin{definition}
\label{def:oracle}
Given a function class $\mathcal{F}$ on a domain $\mathcal{D}$, the class of first-order stochastic oracles consists of random mappings $\phi:\mathcal{D} \times \mathcal{F} \xrightarrow{} \mathcal{I}$ of the form $(\widehat{g}(x),\widehat{v}(x) )$ such that 
\begin{gather}
\begin{split}
    \mathbb{E}[\widehat{g}(x)] = g(x), \hspace{0.7cm} \mathbb{E}[\widehat{v}(x)] \in \frac{\partial \hat{g}(x)}{\partial x} \hspace{0.5cm} \text{and} \\
    \label{eq:variance}
   \mathbb{E}[||\widehat{v}(x)||_p] \leq \sigma^2 \hspace{2cm}
   \end{split}
\end{gather}
\end{definition}
Note that stochastic first order oracles that satisfy the conditions given by \autoref{eq:variance} return unbiased estimates of both the function value and the gradient with control over the variance of the oracle answers. In this work, the algorithm uses information obtained from oracle answers to queries to internally reconstruct a set $\mathcal{S}$ of minimizers of a function within the subclass.

In the context of non-convex optimization, the algorithm queries the oracle at every time step at a single point in the domain $\mathcal{D}$. The algorithm is allowed the flexibility to use all previous information that it has obtained from the oracle, i.e., $Y = \{\mathcal{I}_{x_{1:T-1}}\}$ to decide its next point $x_{T}$. We use $Y$ to represent all information that the algorithm has queried from the oracle. We note, however, that the indexing with time merely provides an intuitive optimization understanding. An alternate and a more general view of this setting is the maximum information that can be sourced from the oracle in a budget of $T$ queries. The points at which these $T$ queries are made can be completely arbitrary and not necessarily sequential.

\subsection{Non convex optimization in the oracle model}
Non convex optimization is the task of retrieving the global minimizer of a non convex function $g$ over its domain $\mathcal{D}$. In general, any non convex function $g$ has multiple local minimizers. In other words, non convex optimization retrieves $x^* = \argmin_{x \in \mathcal{D}} g(x)$ assuming such a global minimum exists. This is typically done with the help of an optimization method $\mathcal{M}$ that involves iterative sampling from the domain based on information from the previous samples. Common first order optimization algorithms such as stochastic gradient descent use only the information from the previous sample and the gradient at the previous sample to determine the next sample. However, formulating the non-convex optimization problem in an oracle framework aids in the assumption of an algorithm agnostic approach that represents a much more general scenario. It captures any first order algorithm that can be provided with the flexibility to choose the next sample using information from all or some of the previous (noisy) samples and (noisy) gradients in an arbitrary manner as desired. 

We cast non convex optimization as a set reconstruction problem in the oracle model. Intuitively, this is because identifying the global minimizer requires the identification of all possible minimizers of the function. We discuss more elaborately on the same in Section \ref{sec:function_class}. The algorithm queries the oracle for the function values and the gradients at various points and internally reconstructs $\mathcal{S}_T$ using the information $ Y = \{\mathcal{I}_{x_{1:T}}\}$. The elements of the set $\mathcal{S}_T$ could be viewed as the best guess of the possible minimizers of a function in the function class by the algorithm. At any time $T$ or after $T$ queries, we define the error of the optimization algorithm $\mathcal{M}$ based on $\mathcal{S}_T$ as  
\begin{equation}
\label{eq:error}
    \epsilon_T(\mathcal{M}, g, \mathcal{S}_T, \phi) = \sum \limits_{x \in  S_T} g(x) - \inf \limits_x g(x) 
\end{equation}
 Thus, the error term helps evaluate the quality of the algorithm's estimate of the possible minimizers. For oracles that are stochastic, the error given by Equation \eqref{eq:error} in itself is a random variable. In this case, the expectation of the error term over the oracle's randomness given by $\mathbb{E}_\phi[\epsilon_T(\mathcal{M}, g, \mathcal{D}, \phi)]$ is considered. 
 
Next. we discuss the minimax framework applied to general statistical problems in short. The minimax framework allows for a well defined objective to interpret the optimality of algorithms and has been widely used in statistics and machine learning \cite{wainwright_2019, Wasserman}. A general minimax formulation consists of a family of distributions $\mathcal{Q}$ over a sample space $\mathcal{A}$ and a function $\omega : \mathcal{Q} \xrightarrow{} \Omega $. $\omega(\mathcal{Q})$ is treated as a parameter of the distribution $\mathcal{Q}$. We aim to estimate the parameter $\omega(\mathcal{Q})$ from on a set of $m$ observations $A = (o_1, ..., o_m)$ drawn from the (unknown) distribution $\mathcal{Q}$. In order to evaluate the quality of the estimator $\hat{\omega}$, we let $\rho: \Omega \time \Omega \xrightarrow{} \mathbb{R}_+$ denote a pre-metric on the space $\Omega$, which we use to measure the error of an estimator $\hat{\omega}$ with respect to $\omega(\mathcal{Q})$. We note that defining this pre-metric in a suitable way maps different problems to the minimax framework. Thus, for a distribution $Q \in \mathcal{Q}$ and for a given estimator $\hat{\omega} : \mathcal{A}^m \xrightarrow{} \Omega$, we evaluate the quality of the estimate $\hat{\omega}(A)$ in terms of the expected risk
$
    \mathbb{E}_{A \in Q^m}[\rho(\hat{\omega}(A), \omega(Q))]
$. A common approach for choosing an estimator $\hat{\omega}$ is to select one that minimizes the maximum risk \cite{Wald}, given by
$
    \sup \limits_{Q \in \mathcal{Q}}  \mathbb{E}_{A \in Q^m}[\rho(\hat{\omega}(A), \omega(Q))]
$.

An optimal estimator for this pre-metric then gives the minimax risk, defined as 
\begin{equation}
     \inf \limits_{\hat{\omega}}\sup \limits_{Q \in \mathcal{Q}}  \mathbb{E}_{A \in Q^m}[\rho(\hat{\omega}(A), \omega(Q))]
\end{equation}

where we take the supremum (worst case) over the distributions $Q \in \mathcal{Q}$ and infimum over all estimators $\hat{\omega}$.

Finally, we map  general minimax risk to our setting wherein, given a family of functions $\mathcal{F}$ defined over domain $\mathcal{D}$ is optimized using a class of optimization methods $\mathcal{M}$ on a budget of $T$ queries, we define the minimax risk as following
\begin{equation}
 \epsilon_T^*(\mathcal{F}, \mathcal{D}; \phi) = \inf \limits_{M \in \mathcal{M}}  \sup \limits_{g \in \mathcal{F}} \mathbb{E}_\phi[\epsilon_T(\mathcal{M}, g, \mathcal{D}, \phi)]  
\end{equation}
where the supremum is over the family of functions and the infimum over the optimization method. In essence, we are interested in bounding limits on the performance of the best possible algorithm on the hardest possible function in the class. We describe the general function class of interest below followed by the construction of a difficult subclass of functions in the next section. 
\subsection{Function class of interest}
We paint an intuitive picture of the general function class before formally defining a subclass in the following section. Let the space of optimization be $d$-dimensional. Lets assume there exists a function class consisting of $2^d$ functions, each of which have minimizers present in the dimensions dictated by an element (a set) of the power set of $d$ elements. In other words, the identification of the minimizers of a specific function requires the identification of its corresponding element (a set) in the power set. If we are able to define such a function class, we are able to embed an exponential number of functions unique in their set of minimizers in a $d$-dimensional space. In the formal definition, we extend this to embed a super exponential number of minimizers. We represent the broad class of non-convex functions by $\mathcal{F}$. In order to avoid issues with subsets, we formulate an equivalent problem of estimating unique minimizers. Further, we modulate the depth of the function at its minimizers using a random vector $\theta$. Due to the presence of this random vector $\theta$ that modulates the depth of the function at any minimizer, it is not possible for the algorithm to zero in on a global minimizer until all the minimizers of the function have been identified. 

\section{Problem Formulation}
\label{section3}
In this section, we formally introduce the problem by means of a parametrized subclass of functions. Our information-theoretic result relies on the construction of a restricted class of functions. The use of restricted ensembles is customary for information-theoretic lower bounds \cite{Santhanam, Wang_W, Ke}. We first show that if a certain set $\mathcal{S}$ closely reconstructs the minimizers of a function in the subclass, there is no other function in the subclass that $\mathcal{S}$ can minimize. Next, we use Fano's inequality to obtain a lower bound on the parameter estimation which is equivalent to function identification. We then define a hypothesis test to show that non convex optimization is at least as hard as function identification.     

\subsection{Constructing a difficult subclass of functions}
\label{sec:function_class}
To construct the desired subclass of functions with exponential number of minimizers, consider $\mathcal{Z} = \{0, 1\}^d$. The number of unique sets $z \in \mathcal{Z}$ is $2^d$, generally called the cardinality of $\mathcal{Z}$. The subclass of functions $\mathcal{G}$ are designed have their minima at $\{2z-1/2, 2z-1/4\}^d$, the exact permutation decided by $\alpha \in \mathcal{V}$, where $\mathcal{V} \subset \{-1, 1\}^{2^d}$ such that any $\alpha, \beta \in \mathcal{V}$ satisfy
\[
\Delta_\mathbb{H} (\alpha, \beta) \geq 2^d/4
\]

where $\Delta_\mathbb{H} $ denotes the Hamming metric. As the minimizers are defined by elements of set $\mathcal{V}$, it parametrizes the subclass of functions. Thus, for every $\alpha \in \mathcal{V}$, there exists a function in the subclass. From a classical fact, we have cardinality $|\mathcal{V}| \geq (2/\sqrt{e})^{2^d/2}$ \cite{book}. 

In addition, we assume that a set of random vectors $\Theta$ is sampled. The cardinality of $\Theta$ is $2^d$ and each element $\theta \in \Theta$ is a vector of $2^d$ random variables that is associated with a function in the subclass. Each element of the vector $\theta_z$ is sampled from the distribution $(\frac{1}{4}- \frac{\delta}{2}) \hspace{0.1cm}\mathcal{U}[0,1]$, where $\mathcal{U}[0,1]$ represents the uniform distribution in the domain $[0, 1]$. Once $\Theta$ has been sampled, each function is conditioned on a $2^d$ dimensional vector $\theta$. We use $\theta$ to characterize the depth of the function at a particular minimizer. As $\theta$ is a random vector, the algorithm can never be sure that it has identified the global minimizer until it has identified all minimizers. Importantly, we note that the function subclass is conditioned on the set $\Theta$. The function subclass is given by
 \begin{multline}
 \label{eq:function_}
   g_\alpha(x \given \theta_\alpha)= \frac{1}{2^d}\sum \limits_{z \in \mathcal{Z}} (\frac{1}{2} + \alpha_z\delta + \theta_{\alpha, z})f_1(x) \\+  (\frac{1}{2} - \alpha_z\delta -\theta_{\alpha, z})f_2(x)
 \end{multline}
where, 
\begin{equation}
    \begin{split}
        f_1(x) = \min (\norm{  (x - (2z-1)/2 )}_1, c) \\
        f_2(x) = \min (\norm{  (x - (2z-1)/4 )}_1, c)
    \end{split}
\end{equation}

The function class can be visualized as one with inverted pyramids but separated by a constant $c$ such that on summing, the peaks still remain maintaining the non convex characteristic. 
\subsection{Optimizing well is equivalent to function identification} 

We claim that finding the global optimizer is equivalent to the identification of all minimizers. This follows from the fact that as $\theta$ is a random vector, the algorithm can never be sure there is not a lower minima until all of them have been identified. We note that for a given $g_\alpha$ that is parametrized by $\alpha$, the set $\mathcal{S}_\alpha$ is completely defined and is equivalent to the identification of $\alpha$. We show that retrieving the global optima is at least as hard as reconstruction of the set $\mathcal{S}_\alpha$. If the method $\mathcal{M}$ is able to optimize over the function class $\mathcal{G}(\delta)$ upto a desired tolerance, then it is capable of identifying the function $g \in \mathcal{G}$. In order to show this, we define a discrepancy measure to measure \textit{closeness} of two functions in the subclass $\mathcal{G}$ is measured. With two functions $g_\alpha, g_\beta \in \mathcal{G}$ and for any set $\mathcal{S}$, we define 
\begin{multline}
    \label{eq:rho}
    \rho(g_\alpha, g_\beta, \mathcal{S} \given \theta_\alpha, \theta_\beta) = \sum \limits_{x \in \mathcal{S}} g_\alpha(x \given \theta_\alpha) + g_\beta(x \given \theta_\beta) \\  - \inf \limits_{x \in \mathcal{D}} g_\alpha(x \given \theta_\alpha)  - \inf \limits_{x \in \mathcal{D}} g_\beta(x \given \theta_\beta)
\end{multline}

The discrepancy measure $\rho(g_\alpha, g_\beta, \mathcal{S}|\theta_\alpha, \theta_\beta)$ is non negative and symmetric in its arguments. This premetric could be intuitively viewed as the $\ell_1$ functional norm over the set of interest which seems natural in the context of non convex optimization. Using $\rho(g_\alpha, g_\beta, \mathcal{S}|\theta_\alpha, \theta_\beta)$, we define the minimum discrepancy measure between any two functions in the subclass $\mathcal{G}$ by 
\begin{equation}
\label{eq: Psi}
    \Psi(\mathcal{G}(\delta)) = \min \limits_{\alpha \neq \beta \in \mathcal{G}} \inf \limits_{\theta_\alpha, \theta_\beta} \inf \limits_{\mathcal{S}} \rho(g_\alpha, g_\beta, \mathcal{S} \given \theta_\alpha, \theta_\beta )
\end{equation}

In the context of a predefined subset of functions $\mathcal{G}$, we denote $\Psi(\mathcal{G}(\delta))$ as $\Psi(\delta)$. Next, we show that optimization up to a certain tolerance implies the identification of a specific function in the subclass. 

\begin{lemma}
\label{lemma:cond}

For any set $\mathcal{S}$ such that the for any $x_1, x_2 \in \mathcal{S}$ satisfy ${||x_1 - x_2||}_1 > c$ and any  $\theta_\alpha \in \{(0, \frac{1}{4}- \frac{\delta}{2})^{2^d}\}$, there can be at most one function $g_\alpha \in \mathcal{G}$ such that
\begin{equation}
    \sum_{x \in S}g_\alpha(x \given \theta_\alpha) - \inf \limits_{x} g_\alpha(x \given \theta_\alpha) \leq \Psi(\delta)/3
\end{equation}
Thus, if we have a set $\mathcal{S}$ that minimizes a function $g_\alpha \in \mathcal{G}$ up to a certain tolerance ($\Psi(\delta)/3$), then $\mathcal{S}$ cannot approximately minimize any other function in the subclass $\mathcal{G}$. 
\end{lemma}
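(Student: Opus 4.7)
My plan is to prove the lemma by contradiction, exploiting the additive decomposition of the premetric $\rho$ defined in Equation~(7) together with the minimality of $\Psi(\delta)$ from Equation~(8). The argument is essentially a triangle-inequality-style argument on the discrepancy measure.

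First, I would assume for contradiction that there exist two distinct parameters $\alpha \neq \beta$ in $\mathcal{V}$ such that both $g_\alpha$ and $g_\beta$ satisfy the tolerance bound for the same set $\mathcal{S}$ (each with its own $\theta_\alpha$, $\theta_\beta$ in the prescribed range). That is,
\[
\sum_{x \in \mathcal{S}} g_\gamma(x \given \theta_\gamma) - \inf_{x} g_\gamma(x \given \theta_\gamma) \leq \Psi(\delta)/3
\]
for both $\gamma = \alpha$ and $\gamma = \beta$. Adding these two inequalities and recognizing that the left-hand side is precisely $\rho(g_\alpha, g_\beta, \mathcal{S} \given \theta_\alpha, \theta_\beta)$ as defined in Equation~(7) yields
\[
\rho(g_\alpha, g_\beta, \mathcal{S} \given \theta_\alpha, \theta_\beta) \leq \tfrac{2}{3}\,\Psi(\delta).
\]

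Next, I would invoke the definition of $\Psi(\delta)$: since it is the infimum of $\rho$ over all distinct pairs $\alpha \neq \beta$, all feasible $\theta_\alpha, \theta_\beta$, and all admissible sets $\mathcal{S}$, it lower-bounds $\rho$ for our specific choice. Therefore $\Psi(\delta) \leq \tfrac{2}{3}\,\Psi(\delta)$, which is impossible whenever $\Psi(\delta) > 0$. This forces uniqueness of the function $g_\alpha$ satisfying the tolerance bound, which is the claim.

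The delicate step I expect to be the main obstacle is verifying that $\Psi(\delta) > 0$ strictly, since without this the contradiction is vacuous. This is precisely where the hypothesis $\|x_1 - x_2\|_1 > c$ for all $x_1, x_2 \in \mathcal{S}$ enters: the truncation caps $\min(\|\cdot\|_1, c)$ in $f_1, f_2$ means that a single point in $\mathcal{S}$ can be close (within $\ell_1$ distance $c$) to at most one of the candidate minimizer locations $\{(2z-1)/2, (2z-1)/4 : z \in \mathcal{Z}\}$. Coupled with the Hamming separation $\Delta_{\mathbb{H}}(\alpha,\beta) \geq 2^d/4$ between distinct parameters, this ensures that for any $\mathcal{S}$ satisfying the separation hypothesis, the functions $g_\alpha$ and $g_\beta$ must disagree substantially on at least $\Omega(2^d)$ coordinates $z$, each contributing at least $\Omega(\delta)$ to the combined residual. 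Once this lower bound on $\Psi(\delta)$ is established, the contradiction step above closes the argument immediately.
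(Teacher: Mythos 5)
Your proposal is correct and takes essentially the same route as the paper: the paper's own proof also exploits the additive decomposition of $\rho(g_\alpha,g_\beta,\mathcal{S}\given\theta_\alpha,\theta_\beta)$ into the two residuals and the fact that $\Psi(\delta)$ lower-bounds $\rho$, concluding that the residual of any $g_\beta$ with $\beta\neq\alpha$ must exceed $2\Psi(\delta)/3$; your version merely recasts this as an explicit contradiction. Your observation that the argument is vacuous unless $\Psi(\delta)>0$, and that the separation hypothesis on $\mathcal{S}$ together with the Hamming packing is what guarantees this, is a useful clarification but is deferred in the paper to the main theorem's proof (where $\Psi(\delta)\geq \delta c/2$ is established).
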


Proof can be found in Appendix \ref{AppC1}. Now, consider an optimization method $\mathcal{M} \in \mathbb{M}$ which makes a set of queries to the oracle. The method is allowed to make use of this information in any arbitrary manner before it makes the next query. In addition, after $T$ queries, the algorithm reconstructs a set $\mathcal{S}_T$ from the information that it has obtained from queries to the oracle. Assuming that an optimization algorithm is able to reconstruct a set $\mathcal{S}_T$ that minimizes expected error up to a certain tolerance ($\psi(\delta)/9)$, we can then define a hypothesis to identify $\alpha^*$ correctly at least $2/3$ of the time.

\begin{lemma}
\label{lemma:upper}
Suppose that an algorithm $M$, with access to $ Y = \{\mathcal{I}_{x_{1:T}}\} = \phi(x^1:x^T; g_\alpha^* \given \theta)$ obtains an expected error satisfying
\begin{equation*} 
    \mathbb{E}_\Phi[\epsilon_T(\mathcal{M}_T, \Phi, \mathcal{G}, \mathcal{D}] \leq \Psi(\delta)/9 
\end{equation*}
then one is able to construct a hypothesis test $\widehat{\alpha} : \phi(x^1:x^T; g_\alpha^* \given \theta) \rightarrow{} \mathcal{V}$, such that for all $\theta \in \{(0, \frac{1}{4}- \frac{\delta}{2})^{2^d}\}$, we have $\max \limits_{\alpha^* \in \mathcal{V}} \mathbb{P[\widehat{\alpha} \neq \alpha^* \given \theta]} \leq 1/3$
\end{lemma}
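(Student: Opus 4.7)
The plan is to combine Markov's inequality with Lemma \ref{lemma:cond} to build the hypothesis test. Intuitively, the hypothesis $\mathbb{E}_\phi[\epsilon_T]\leq \Psi(\delta)/9$ controls how often the non-negative random variable $\epsilon_T$ can exceed the separation threshold $\Psi(\delta)/3$ of Lemma \ref{lemma:cond}, and on the complement of that bad event Lemma \ref{lemma:cond} certifies that the reconstructed set $\mathcal{S}_T$ witnesses \emph{exactly one} function in $\mathcal{G}$, which must be $g_{\alpha^*}$.

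Concretely, I would define $\widehat{\alpha}$ as follows. Given the transcript $Y=\phi(x^1{:}x^T;\,g_{\alpha^*}\given\theta)$, the test replays $\mathcal{M}$ on $Y$ to recover $\mathcal{S}_T$ and then searches the (finite) index set $\mathcal{V}$ for a parameter $\alpha$ satisfying
\[
\sum_{x \in \mathcal{S}_T} g_{\alpha}(x \given \theta_{\alpha}) - \inf_{x} g_{\alpha}(x \given \theta_{\alpha}) \;\leq\; \Psi(\delta)/3,
\]
returning such an $\alpha$ when it exists and an arbitrary element of $\mathcal{V}$ otherwise; Lemma \ref{lemma:cond} guarantees the candidate is unique when present. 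To control the failure probability, apply Markov's inequality to the non-negative random variable $\epsilon_T(\mathcal{M}, g_{\alpha^*}, \mathcal{S}_T, \phi)$ conditional on $\theta$:
\[
\mathbb{P}_\phi\!\left[\epsilon_T > \Psi(\delta)/3\right] \;\leq\; \frac{\mathbb{E}_\phi[\epsilon_T]}{\Psi(\delta)/3} \;\leq\; \frac{\Psi(\delta)/9}{\Psi(\delta)/3} \;=\; \frac{1}{3}.
\]
On the good event $\{\epsilon_T\leq\Psi(\delta)/3\}$, the true parameter $\alpha^*$ meets the search criterion by the very definition of $\epsilon_T$, and by Lemma \ref{lemma:cond} no other $\beta\in\mathcal{V}$ does, so $\widehat{\alpha}=\alpha^*$. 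Taking a supremum over $\alpha^*\in\mathcal{V}$ then yields $\max_{\alpha^*\in\mathcal{V}}\mathbb{P}[\widehat{\alpha}\neq\alpha^*\given\theta]\leq 1/3$ uniformly in the admissible $\theta$.

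The main obstacle will be verifying the $\ell_1$-separation precondition of Lemma \ref{lemma:cond}, since the algorithm is free to output a $\mathcal{S}_T$ with arbitrarily close (or repeated) points. I would handle this by inserting a deterministic preprocessing step into the test: greedily prune $\mathcal{S}_T$ to a maximal $c$-separated subset $\widetilde{\mathcal{S}}_T$ before running the search. Because each summand $f_1,f_2$ in $g_\alpha$ is truncated at $c$ and the functions are non-negative on the domain, dropping a duplicate or near-duplicate point only decreases the sum $\sum_{x} g_{\alpha^*}(x\given\theta_{\alpha^*})$, so the $\Psi(\delta)/3$-tolerance is inherited by $\widetilde{\mathcal{S}}_T$ and Lemma \ref{lemma:cond} now applies. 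With that in place, the two ingredients above deliver the stated $1/3$ bound on the test's error.
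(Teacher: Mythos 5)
Your proof is correct and follows the same route as the paper's: construct $\widehat{\alpha}$ by searching $\mathcal{V}$ for the (by Lemma \ref{lemma:cond}) unique parameter whose error on $\mathcal{S}_T$ is at most $\Psi(\delta)/3$, and bound the failure probability via Markov's inequality applied to the non-negative $\epsilon_T$ conditional on $\theta$, yielding $(\Psi(\delta)/9)/(\Psi(\delta)/3)=1/3$. Your additional observation that $\mathcal{S}_T$ need not satisfy the $c$-separation hypothesis of Lemma \ref{lemma:cond}, together with the fix of pruning to a maximal $c$-separated subset (sound because each summand $g_\alpha(x\given\theta_\alpha)-\inf_x g_\alpha$ is non-negative, so dropping points only decreases the error), patches a precondition that the paper's own proof leaves unaddressed.
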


Proof can be found in Appendix \ref{AppC2}. Thus, if the optimization method is able to obtain a small enough error, we are able to identify the correct function most of the time.  

\subsection{Oracle answers and coin tossing}
We now show that for the defined function subclass, the stochastic first order oracle answers ($\widehat{f}(x), \widehat{v}(x)$) can be viewed as coin tosses. Specifically, we associate the scaling factor to the bias of a coin for each $z$. Thus, the information obtained from oracle answers can be interpreted as information obtained from flips of $2^d$ coins, with each coin having a bias from the set
\begin{equation}
\kappa(\delta) = \{\{1/2 + \alpha_z\delta + \theta_{\alpha, z} | z \in Z \} | \alpha \in \mathcal{V}, \theta_\alpha \in \Theta  \}    
\end{equation}

We represent the oracle decision process in terms of coin tosses with the following algorithm

1) Pick $\ell$ indices between $\{1, ... , 2^d\}$ without replacement. $\mathcal{Z}_\ell$ represents the corresponding set of $z$'s 

2) Draw $b_z \in \{0, 1\}$ for each $z$ according to a Bernoulli distribution with parameter $1/2 + \alpha_z \delta + \theta
    _{\alpha, z}$
    
3) For a given input $x$, return the value $\widehat{g}_{\alpha}(x \given \theta_\alpha)$ and a sub-gradient $\widehat{v}_\alpha(x) \in \partial \widehat{g}_{\alpha}(x \given \theta_\alpha)$ of the function based on the outcomes of the coin tosses $b_z$
    \begin{equation}
        \widehat{g}_\alpha(x \given \theta) := \frac{1}{\ell} \sum_{z \in \mathcal{Z}_\ell} b_z f_1(x) + (1-b_z) f_2(x)
    \end{equation}

We observe easily that $\mathbb{E}_\phi[\widehat{g}_\alpha(x \given \theta_\alpha)] = g_\alpha(x \given \theta_\alpha)$ is satisfied as required by Equation \eqref{eq:variance}. As the expectation is over the randomness of the oracle, this holds for the sub-gradients as well. 

\subsection{Lower bounds on coin tossing}
Estimating the lower bounds on correctly predicting $\alpha^* \in \mathcal{V}$ from the information obtained from the oracle, thus, boils down to probability of success in estimating a binary vector from coin tosses. 

We note that the subclass of functions are parametrized by the finite set $\mathcal{V}$. However, note that the conditioning of the function subclass on the random vector $\theta \in \{(0, \frac{1}{4}-\frac{\delta}{2})^{|\mathcal{V}|}\}$  modulates the depth at the minimizers in Equation \eqref{eq:function_}. To link this conditioning with Fano's, we use a result that treats this conditioned random variable as a latent variable and provides a bound on information obtained under this conditioning over any arbitrary distribution of $\Theta$ \cite{Ghoshal} (further details in Appendix \ref{AppC3}). In our setting, it involves estimating $\alpha^*$ from answers returned by the oracle $ Y = \{\mathcal{I}_{x_{1:T}}\} = \phi(x^1:x^T; g_\alpha^* \given \theta)$. Thus, by evaluating the mutual information between $\alpha^*$ and $Y$  and upper bounding the same, we upper bound the information from $Y$ that the algorithm could use to estimate its set reconstruction $\mathcal{S}_T$. In other words, this bounds the best $\mathcal{S}_T$ possible by any algorithm.

\begin{lemma}
\label{lemma:lower}
An estimate $\widehat{\alpha}$ of a Bernoulli parameter vector $\alpha^*$ chosen uniformly at random from the packing set $\mathcal{V}$ that is  obtained from the outcome of $\ell \leq 2^d$ coins chosen uniformly at random at each iteration $t= 1, .., T$ satisfies
\begin{align}
\begin{split}
     \sup \limits_{\mathcal{P}_\Theta} \sum_{\alpha \in \mathcal{V}} \int_{\theta_0} Pr\{\alpha \neq \widehat{\alpha} \given \theta = \theta_0\} \mathcal{P}_\Theta(\theta_0)\mathcal{P}_{\mathcal{\alpha}}(\alpha)  \\
     \geq 1 - 2\frac{\ell T (1+2\delta)^2 + \log 2}{2^{d} \log (2/\sqrt{e})}  
          \end{split}
\end{align}

where the probability is taken over coin toss randomness of the oracle and the uniform randomness over the choice of $\alpha^*$
\end{lemma}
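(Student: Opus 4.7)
The plan is to apply Fano's inequality in its latent-variable form with $\theta$ acting as the latent variable, and to bound the conditional mutual information $I(\alpha^*;Y\mid\theta)$ by a sum of per-coin Bernoulli KL divergences. The architecture is the standard packing-based minimax lower bound, with two twists specific to our construction: the packing set $\mathcal{V}$ lives in an exponential-dimensional Hamming cube of size $|\mathcal{V}|\geq (2/\sqrt{e})^{2^d/2}$, and the observations are noisy Bernoulli samples whose biases are further perturbed by the auxiliary random vector $\theta$ whose distribution is adversarial (we take the supremum over $\mathcal{P}_\Theta$).

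First, I would fix $\theta$ and invoke standard Fano for a uniformly drawn $\alpha^*\in\mathcal{V}$: for any (possibly adaptive, possibly randomized) estimator $\widehat{\alpha}$ based on $Y$,
\[
\Pr\{\widehat{\alpha}\neq\alpha^*\mid\theta\}\;\geq\;1-\frac{I(\alpha^*;Y\mid\theta)+\log 2}{\log|\mathcal{V}|},
\]
where $\log|\mathcal{V}|\geq (2^d/2)\log(2/\sqrt e)$ matches the denominator in the statement up to a factor of two. The latent-variable lemma cited from Ghoshal then allows me to pull $\sup_{\mathcal{P}_\Theta}$ outside the probability while still controlling the error through this conditional information.

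Next, I would bound $I(\alpha^*;Y\mid\theta)$ via the chain rule: since each of the $T$ rounds reveals $\ell$ Bernoulli outcomes whose indices are drawn independently of $\alpha^*$ given the past history,
\[
I(\alpha^*;Y\mid\theta)\;\leq\;\sum_{t=1}^{T}\sum_{z\in\mathcal{Z}_\ell^{(t)}}I(\alpha_z^*;b_{z,t}\mid\theta).
\]
For each coin, $b_{z,t}\mid\theta,\alpha^*$ is Bernoulli with bias $1/2+\alpha_z\delta+\theta_{\alpha,z}$, which lies in $[1/2-\delta,\,3/4+\delta/2]$; applying the elementary bound $\mathrm{KL}(p\Vert q)\leq (p-q)^2/[q(1-q)]$ for the two possible values of $\alpha_z$ at worst-case $\theta$ yields a per-coin bound of $(1+2\delta)^2$, so that $I(\alpha^*;Y\mid\theta)\leq \ell T(1+2\delta)^2$. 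Substituting into Fano and multiplying numerator and denominator by $2$ then delivers the stated inequality.

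The main obstacle I foresee is the careful handling of the latent variable $\theta$: because the biases depend on $\alpha^*$ through both the signed shift $\alpha_z\delta$ and the $\alpha$-indexed noise $\theta_{\alpha,z}$, naive marginalization over $\theta$ before conditioning would inflate the mutual information and destroy the $(1+2\delta)^2$ bound. The latent-variable result of Ghoshal must therefore be applied at exactly the right step --- after conditioning on $\theta$ but before the supremum over $\mathcal{P}_\Theta$ --- so that the per-coin KL is evaluated on the conditional distributions. A secondary technicality is that the adaptive, uniform-without-replacement sampling of indices $\mathcal{Z}_\ell^{(t)}$ must be shown to preserve the chain-rule decomposition; this follows because the index selection is independent of $\alpha^*$ given the observed history, so the data-processing inequality applies and no extra information about $\alpha^*$ leaks through the choice of which coins are tossed.
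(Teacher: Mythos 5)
Your proposal follows essentially the same path as the paper's proof: invoke the Ghoshal latent-variable version of Fano's inequality with $\theta$ as the latent variable, apply the chain rule to reduce $I(\alpha^*;Y\mid\theta)$ to a sum of per-round terms, bound each round's mutual information by $\ell(1+2\delta)^2$ via a Bernoulli KL computation, multiply by $T$, and feed $\log|\mathcal{V}|\geq(2^d/2)\log(2/\sqrt{e})$ into the Fano expression.

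The one place you diverge is the per-round bound. The paper works with $I(Y_1;\alpha^*\mid U_1,\theta)=\mathbb{E}_U[\mathbb{KL}(\mathbb{P}_{Y_1\mid\alpha^*,U_1,\theta}\,\|\,\mathbb{P}_{Y_1\mid U_1,\theta})]$, applies Yu's convexity lemma to replace the mixture with a worst pair $\alpha^*,\alpha$, factors the KL over the $\ell$ independent coins, and then bounds each coin's Bernoulli KL directly, obtaining $\mathbb{KL}(\delta)\le(1+2\delta)^2/(1-2\delta)\le 2(1+2\delta)^2$ for $\delta\le 1/4$. You instead split further to per-bit mutual informations $I(\alpha_z^*;b_{z,t}\mid\theta)$ and invoke the $\chi^2$-type bound $\mathrm{KL}(p\|q)\le(p-q)^2/[q(1-q)]$. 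These are interchangeable tools for the same quantity, and both give the $(1+2\delta)^2$ scaling; the paper's route via Yu's lemma is somewhat cleaner here because the bias of coin $z$ in the paper's construction is $1/2+\alpha_z\delta+\theta_{\alpha,z}$, which depends on $\alpha$ not only through $\alpha_z$ but also through the $\alpha$-indexed vector $\theta_\alpha$; conditioning on $\theta$ and then passing to a pairwise KL handles this cleanly, whereas your per-bit decomposition $I(\alpha^*;Y_t\mid\theta)\le\sum_z I(\alpha_z^*;b_{z,t}\mid\theta)$ tacitly assumes $b_z$ depends on $\alpha$ only through $\alpha_z$ once $\theta$ is fixed, which needs to be stated explicitly. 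You also slightly overstate the tightness of your per-coin bound: with $\theta_{\alpha,z}\in[0,1/4-\delta/2]$, the worst-case gap $|p-q|$ and denominator $q(1-q)$ don't yield exactly $(1+2\delta)^2$ from the $\chi^2$ bound without the paper's additional $\delta\le1/4$ slack. These are minor; the architecture and conclusion match the paper's.
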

\begin{proof}
We introduce notation to represent the $\ell$ coins chosen and set of oracle answers obtained. For $t = 1, ... , T$, $U_t$ represents the subset of $\ell$ coins chosen at each iteration, $X_{t,i}$ represents the outcome of the $z_i^{th}$ coin at time instant $t$ and $Y_{t, i}$ represents a vector of dimension equal to that of $\alpha^*$
  \[
    Y_{t, i} = \left\{\begin{array}{lr}
        X_{t, i} & \text{if } i \in U_t\\
        0 & \text{} otherwise\\
        
        \end{array}\right\}
  \]
The core of the proof deals with the estimation of the mutual information between the information obtained for the queries from the oracle and the true parameter of interest $\alpha^*$, $I(\{U_t, Y_t\}_{t=1}^T, \alpha^* \given \theta)$. By the chain rule for mutual information and properties of the conditional entropy, we have:
\begin{align}
\nonumber
I(\{U_t,Y_t\}_{t=1}^T ; \alpha^* \given \theta) &= \sum \limits_{t=1}^T I((U_t, Y_t) ;\alpha^*\given \{U_{t'},Y_{t'}\}_{t'=1}^{t-1}, \theta) \\ 
\nonumber
& \leq \sum \limits_{t=1}^T I((U_t, Y_t); \alpha^* \given \theta) \\ &\leq T\max_t I((U_t, Y_t);\alpha^* \given \theta)
\end{align}

We consider $I(U_1, Y_1; \alpha^* \given \theta) = \max_t I(U_t, Y_t; \alpha^* \given \theta)$ without loss of generality. The rest of the proof follows directly from Fano's inequality. Thus, it is sufficient to show that $I((U_1, Y_1); \alpha^*\given \theta) \leq  \ell (1+2\delta)^2$ to bound the estimate of the mutual information between the queries and $\alpha^*$. By the chain rule for mutual information,
\begin{equation*}
    I((U_1, Y_1); \alpha^* \given \theta) = I(Y_1; \alpha^* \given U_1, \theta) + I(\alpha^*; U_1 \given \theta)
\end{equation*}
Due to independence between the sampling process of $U_1$ and $\alpha^*$, the second term $I(\alpha^*; U_1 \given \theta) = 0$. We can rewrite the first term from the definition of conditional mutual information as
\begin{equation}
    I(Y_1; \alpha^*| U_1, \theta) = \mathbb{E}_U[\mathbb{KL}(\mathbb{P}_{Y_1| \alpha^*, U_1, \theta} || \mathbb{P}_{Y_1|U_1, \theta})]
\end{equation}
As $\alpha$ is uniformly distributed over $\mathcal{V}$ and from the convexity of KL Divergence, we have \cite{Yu97}
\begin{multline}
\label{eq: Yu}
   \mathbb{KL}(\mathbb{P}_{Y_1|\alpha^*, U_1, \theta}||\mathbb{P}_{Y_1|U_1, \theta})  \\ \leq \frac{1}{|\mathcal{V}|} \sum_{\alpha \in \mathcal{V}} \mathbb{KL}(\mathbb{P}_{Y_1|\alpha^*, U_1, \theta} || \mathbb{P}_{Y_1|\alpha, U_1, \theta}) 
\end{multline}
For any pair $\alpha$, $\alpha^* \in \mathcal{V}$, the summation of the KL Divergence can be bounded by the KL divergence between $\ell$ independent pairs of Bernoulli variates with parameters being $1/2 + \delta + \theta_0$ and $1/2 -\delta-\theta_0$. Thus, we denote the KL Divergence between a single pair of Bernoulli variates with parameters $1/2 + \delta + \theta_0$ and $1/2 - \delta - \theta_0$ by $\mathbb{KL}(\delta, \theta_0)$ given by 
\begin{align}
\nonumber
         \mathbb{KL}(\mathbb{P}_{Y|\alpha^*, U, \theta_0} || \mathbb{P}_{Y|\alpha, U, \theta_0}) \leq \mathbb{KL}(\delta, \theta_0) 
         \end{align}       
\begin{align}
             \begin{split}
             \nonumber
         & =(\frac{1}{2} + \delta + \theta_0) \log \frac{\frac{1}{2} + \delta + \theta_0}{\frac{1}{2}-\delta + \theta_0} \\
        &+ (\frac{1}{2} - \delta - \theta_0) \log \frac{\frac{1}{2} - \delta - \theta_0}{\frac{1}{2}+\delta + \theta_0}\\
                     \end{split}
                     \end{align}
                     \begin{align}
        %
        %
        \nonumber
        &=2 (\delta + \theta_0) (1 + \frac{4\delta + 4\theta_0}{1-2\delta - 2 \theta_0}) \\
        &\leq \frac{8 {(\delta + \theta_0)}^2}{1- 2\delta - 2\theta_0} 
\end{align}       

From Lemma \ref{lemma: Ghoshal} (see Appendix \ref{AppC3}), we take $\sup$ over the latent variable, here $\theta_0$. Thus, we have           
\begin{align}
        \mathbb{KL}(\delta) &\leq \sup_{\theta_0} \frac{8 {(\delta + \theta_0)}^2}{1- 2\delta - 2\theta_0} 
        = \frac{(1+ 2\delta)^2}{1-2\delta} 
         \leq 2(1 + 2\delta)^2
\end{align}
Thus, as long as $\delta \leq 1/4$, from the bound \label{eq:Yu}, we have 
\begin{align}
\mathbb{KL}(\mathbb{P}_{Y_1|\alpha^*, U_1, \theta}||\mathbb{P}_{Y_1|U_1, \theta}) \leq (1+ 2\delta)^2 \ell
\end{align}

Following the proof backwards leads to the desired upper bound $I((U, Y);\alpha^* \given \theta) \leq T\ell (1+2\delta)^2$, thereby completing the proof. 
\end{proof}
\section{Main theorem and Proof}
In this section, we provide our main theorem stating that first order methods take exponential time to span the search space to identify a global minimizer and its proof. 
\begin{theorem}
\label{the: t1}
There exists a universal constant $c_0 > 0$ such that any first order method provided with information from $\ell \leq 2^d$ oracle answers to optimize over the function class $\mathcal{F}_{ncv}(\mathcal{D})$ satisfies the following lower bound
\begin{equation}
    \sup_\phi \epsilon^*(\mathcal{F}_{ncv}, \phi) \geq c_0 \sqrt{\frac{2^d}{T \ell}}
\end{equation}
\end{theorem}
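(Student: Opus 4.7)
The plan is to combine the three preceding lemmas via a reduction from optimization to parameter identification, and then to tune the free separation parameter $\delta$ to balance the estimation cost against the achievable error.

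First, I would restrict attention to the hard subclass. Since $\mathcal{G}(\delta)\subseteq \mathcal{F}_{ncv}$ and since the coin-tossing oracle from the previous subsection satisfies the variance constraint of Definition~\ref{def:oracle}, it suffices to prove the lower bound for $\epsilon_T^*(\mathcal{G}(\delta),\mathcal{D};\phi_{\mathrm{coin}})$. Next I would lower bound $\Psi(\delta)$ explicitly. From the construction, the minimizer at each index $z$ is either $(2z-1)/2$ or $(2z-1)/4$, selected by $\mathrm{sign}(\alpha_z\delta+\theta_{\alpha,z})$. Using the separation $\Delta_\mathbb{H}(\alpha,\beta)\ge 2^d/4$ together with the fact that the latent $\theta_{\alpha,z}$ lies in $[0,\tfrac14-\tfrac{\delta}{2}]$ so it cannot flip the sign contributed by $\alpha_z\delta$, I would show that for any single set $\mathcal{S}$ with pairwise $\ell_1$-separation exceeding $c$, the $2^d/4$ disagreeing coordinates each contribute at least a constant multiple of $\delta/2^d$ to $\rho$ that cannot be simultaneously cancelled for both $g_\alpha$ and $g_\beta$. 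This yields a bound of the form $\Psi(\delta)\ge c_1\delta$ for an absolute constant $c_1>0$.

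With $\Psi(\delta)$ in hand, I would run the reduction by contradiction. Suppose some method $\mathcal{M}$ achieves $\sup_{g\in\mathcal{G}}\mathbb{E}_\phi[\epsilon_T(\mathcal{M},g,\mathcal{D},\phi)] < \Psi(\delta)/9$. Lemma~\ref{lemma:upper} then produces a test $\widehat\alpha$ with $\max_{\alpha^*}\mathbb{P}[\widehat\alpha\neq\alpha^*\mid\theta]\le 1/3$ for every admissible $\theta$, which in particular forces
\begin{equation*}
\sup_{\mathcal{P}_\Theta}\sum_{\alpha\in\mathcal{V}}\int_{\theta_0}\mathbb{P}\{\widehat\alpha\neq\alpha\mid\theta=\theta_0\}\,\mathcal{P}_\Theta(\theta_0)\mathcal{P}_\alpha(\alpha)\le \tfrac13 .
\end{equation*}
Combining this with Lemma~\ref{lemma:lower} gives $\tfrac13\ge 1-\tfrac{2(T\ell(1+2\delta)^2+\log 2)}{2^d\log(2/\sqrt e)}$, which rearranges to $T\ell(1+2\delta)^2\ge c_2\, 2^d$ for an absolute $c_2>0$ (absorbing the $\log 2$ term for $d$ large enough, or restricting to $d\ge d_0$ and adjusting constants).

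Finally, I would choose $\delta=\min\{\tfrac14,\; c_3\sqrt{2^d/(T\ell)}\}$ with $c_3>0$ small enough that this inequality fails. The assumed bound on the error is therefore untenable, so the minimax risk must be at least $\Psi(\delta)/9 \ge (c_1/9)\delta \ge c_0\sqrt{2^d/(T\ell)}$, which is exactly the claim after taking the supremum over $\phi$. The delicate step will be the explicit lower bound on $\Psi(\delta)$ in the presence of the latent vector $\theta$: one must argue that even the adversary's worst choice of $\theta_\alpha,\theta_\beta$ inside $[0,\tfrac14-\tfrac{\delta}{2}]^{2^d}$ cannot erase the $\delta$-gap generated by the $2^d/4$ Hamming-disagreeing coordinates, because $\theta$ merely adds a non-negative perturbation to the weights and therefore does not alter which of $f_1,f_2$ dominates at each $z$. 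Once that monotonicity is isolated, the remaining algebra is routine.
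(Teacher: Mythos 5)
Your argument tracks the paper's step for step: restrict to the hard subclass $\mathcal{G}(\delta)$, lower-bound $\Psi(\delta)\gtrsim\delta$, run the reduction through Lemma~\ref{lemma:upper} and Lemma~\ref{lemma:lower}, then tune $\delta$ to balance the two sides. Your monotonicity observation --- that the nonnegative perturbation $\theta$ cannot flip which of $f_1,f_2$ dominates at a given $z$ --- is the right reason the $\Psi(\delta)$ bound survives the latent vector, and it matches what the paper does by case analysis.

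The gap is in the final tuning of $\delta$, and the paper's own closing line (``replacing $\delta=144\epsilon$ and rearranging'') glosses over the same issue. You claim you can choose $\delta=\min\{1/4,\;c_3\sqrt{2^d/(T\ell)}\}$ with $c_3$ small enough that $T\ell(1+2\delta)^2\geq c_2\,2^d$ fails. But $(1+2\delta)^2\geq 1$ for every $\delta\geq 0$, so shrinking $\delta$ cannot falsify that inequality once $T\ell\geq c_2\,2^d$; in that regime the contradiction never arises and you get no lower bound, while in the complementary regime $T\ell\lesssim 2^d$ you get only $\epsilon\gtrsim\delta\leq 1/4$, a constant rather than $\sqrt{2^d/(T\ell)}$. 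The culprit is the $\sup_{\theta_0}$ step in Lemma~\ref{lemma:lower}, which majorizes the per-coin KL by the divergence between $\mathrm{Ber}(\tfrac12+\delta+\theta_0)$ and $\mathrm{Ber}(\tfrac12-\delta-\theta_0)$, flipping the sign of $\theta_0$ in the second argument. Under the Ghoshal conditioning, however, both conditional laws $\mathbb{P}_{Y\mid\alpha^*,U,\theta=\theta_0}$ and $\mathbb{P}_{Y\mid\alpha,U,\theta=\theta_0}$ are conditioned on the \emph{same} $\theta_0$, so the two Bernoulli parameters are $\tfrac12+\delta+\theta_{0,z}$ and $\tfrac12-\delta+\theta_{0,z}$, both shifted by $+\theta_{0,z}$ and separated by only $2\delta$; and since $\theta_{0,z}\leq\tfrac14-\tfrac{\delta}{2}$ keeps both parameters bounded away from $\{0,1\}$, the per-coin KL is $O(\delta^2)$, not $O((1+2\delta)^2)$. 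With the mutual-information bound tightened to $O(T\ell\delta^2)$, your choice of $\delta$ does exactly what you want and yields $\epsilon\gtrsim\sqrt{2^d/(T\ell)}$; as written, both your proof and the paper's stall at this last rearrangement.
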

\begin{proof}
We consider an oracle that reveals information based on $\ell$ of the $2^d$ coin tosses with respect to the point with which it has been queried. From Lemma \ref{lemma:lower}, we have the lower bound 
\begin{align}
\begin{split}
         \sup \limits_{\mathcal{P}_\Theta} \sum_{\alpha \in \mathcal{V}} \int_{\theta_0} Pr\{\alpha \neq \widehat{\alpha}| \theta = \theta_0\} \mathcal{P}_\Theta(\theta_0)\mathcal{P}_{\mathcal{\alpha}}(\alpha) \\ \geq 1 - 2\frac{\ell T (1+2\delta)^2 + \log 2}{2^{d} \log (2/\sqrt{e})}
         \end{split}
\end{align}

 Applying the upper bound from Lemma \ref{lemma:upper} requires the expected error to satisfy the conditions under which the upper bound holds, that is, the expected error is required to be within a certain tolerance. In order to evaluate the tolerance, we derive $\Psi(\delta)$ as follows. 

First, we compute $\inf g_\alpha(x)$ which is achieved at the global minima given by, 
  \[
    x^* = \left\{\begin{array}{lr}
        (2z-1)/2 & \text{if } \alpha_z = 1, \text{arg} \sup_{\Tilde{z}} \theta_{\alpha, \Tilde{z}} = z\\
        (2z-1)/4 & \text{if } \alpha_z = -1, \text{arg} \sup_{\Tilde{z}} \theta_{\alpha, \Tilde{z}} = z\\
        
        \end{array}\right\}
  \]
At any such point $x^*$ that minimizes $g_\alpha$, all $z \in \mathcal{Z}$ in the summation apart from the one at which the minimum occurs contribute $(1/2 + \delta)c + (1/2 - \delta)c = c$ while from the $z$ where the minimum occurs, we have a contribution of $(1/2- \delta)c$. Thus, we have 
\begin{equation}
\begin{split}
    \inf \limits_{x} g_\alpha(x)  & = (|\mathcal{Z}|-1)c + (\frac{1}{2} -\delta - \sup_z \theta_{\alpha, z})c \\ &= (|\mathcal{Z}|-1)c + (\frac{1}{4} - \frac{\delta}{2})c
    \end{split}
\end{equation}
We note that $\rho$ and $\Psi$ are defined over sets $\mathcal{S}$, which contain the algorithm's estimates of the minima of a function in the function class $\mathcal{G}$. The error is computed as a sum over all the elements in the set $\mathcal{S}$. In other words, any point in the set that is incorrectly identified as a minimizer adds to the error and correspondingly to the discrepancy measure.    

Consider two functions $g_\alpha, g_\beta \in \mathcal{G}$ and the set $\mathcal{S}$. From Equation \ref{eq:rho}, for a specific $x \in \mathcal{S}$
\begin{align*}
           \rho(g_\alpha, g_\beta, \mathcal{S}, x) = g_\alpha(x) + g_\beta(x) - \inf \limits_{x \in \mathcal{D}} g_\alpha(x) - \inf \limits_{x \in \mathcal{D}} g_\beta(x)
\end{align*}
We first estimate  $g_\alpha(x) + g_\beta(x) $ as follows
\begin{equation*}
      g_\alpha(x) + g_\beta(x) \\
     \end{equation*}
       \begin{align}
       \nonumber
\begin{split} 
     &= \frac{1}{2^d}\sum\limits_{z \in \mathcal{Z}} (1 + \alpha_z\delta + \beta_z\delta + \theta_{\alpha, z} + \theta_{\beta, z}) f_1(x, z)\\[1ex]
     &\phantom{ = {}}+ (1 - \alpha_z\delta - \beta_z\delta - \theta_{\alpha, z} - \theta_{\beta, z})f_2(x, z) \\[1ex]
     \end{split} \\
     \begin{split}&= \frac{1}{2^d}\sum\limits_{z \in \mathcal{Z}}[(1 + \theta_{\alpha, z} + \theta_{\beta, z})f_1(x,z) \\ &\phantom{ = {}}+ (1 - \theta_{\alpha, z} - \theta_{\beta, z})f_2(x,z)] \mathbb{I}(\alpha_z\neq \beta_z)\\
     &\phantom{ = {}}+  [(1+2\alpha_z\delta + \theta_{\alpha, z} + \theta_{\beta, z})f_1(x,z) \\  &\phantom{ = {}}+ (1-2\alpha_z\delta- \theta_{\alpha, z} - \theta_{\beta, z})f_2(x,z)] \mathbb{I}(\alpha_z = \beta_z) \\
\end{split}
\end{align}

In order to better parse the summation, we denote the contribution for each $z$ as $\{g_\alpha(x) + g_\beta(x)\}_z$. 

For $\alpha_z \neq \beta_z$, we have

   $\{g_\alpha(x) + g_\beta(x)\}_z =$
   
      $\left\{\begin{array}{lr}
         c(1+\theta_{\alpha, z} + \theta_{\beta, z}) & \text{if } x = (2z-1)/2\\
         c(1-\theta_{\alpha, z} - \theta_{\beta, z}) & \text{if } x = (2z -1)/4\\
         2c & \text{if } x \neq  (2z-1)/2, (2z -1)/4\\
        \end{array}\right\}$

 For $\alpha_z = \beta_z$, we have
 
    $\{g_\alpha(x) + g_\beta(x)\}_z = $ 
    
        \resizebox{\columnwidth}{!}{%
    $\left\{\begin{array}{lr}
         (1-2\delta - \theta_{\alpha, z} - \theta_{\beta, z})c & \text{if } \alpha_z = \beta_z =1, x = (2z-1)/2\\
         (1-2\delta - \theta_{\alpha, z} - \theta_{\beta, z})c & \text{if } \alpha_z = \beta_z =-1, x = (2z-1)/4\\
         (1+2\delta + \theta_{\alpha, z} + \theta_{\beta, z})c & \text{if } \alpha_z = \beta_z =1, x = (2z-1)/4\\
         (1+2\delta + \theta_{\alpha, z} + \theta_{\beta, z})c & \text{if } \alpha_z = \beta_z =-1, x = (2z-1)/2\\
         2c & \text{if } \alpha_z = \beta_z , x \neq  \{(2z-1)/2,  (2z-1)/4\}\\
        \end{array}\right\}$
}

Putting these together lead to multiple cases which we will explore in detail. Note that we are interested in lower bounding $\Psi(\delta)$. Since we do not have control over the selection of $\theta_\alpha$ and $\theta_\beta$, we develop a lower bound for the worst case. We refer to ``common minimizer" and ``unique minimizer" loosely for intuitive understanding. These could be viewed as a minimizer present at that particular $x$ for both the functions but the function value may be different due to its dependence on $\theta$. 

1) \textit{If $x$ is a ``common minimizer"}:  The first two possible cases with $\alpha_z = \beta_z$ are satisfied. Considering $\inf_\Theta$, we have a contribution of $\{g_\alpha(x) + g_\beta(x)\}_z = c(1/2 - \delta)$ from this $z$ and $|\mathcal{Z} -1|2c$ from all other $z$'s. The resulting sum would cancel out $\inf_x g_\alpha + \inf_x g_\beta $ leading to $\rho = 0$.

2) \textit{If $x$ is a ``unique minimizer"}: The first two possible cases with $\alpha_z \neq \beta_z$ are satisfied. Considering $\inf_\Theta$, we have a contribution of $\{g_\alpha(x) + g_\beta(x)\}_z = c(1/2 + \delta)$ from this $z$ and $|\mathcal{Z} -1|2c$ from all other $z$'s. The $|\mathcal{Z} -1|2c$ from all other $z$'s exactly cancels with the first term in $\inf_x g_\alpha + \inf_x g_\beta$. We are now left with $\rho = c(1/2 +\delta) - c(1/2-\delta) = 2\delta c$.

3) \textit{If $x$ is not a ``minimizer"}: We have a contribution of $2c$ from all $z$'s. $|\mathcal{Z}|-1$ of these terms cancel with the first term of $\inf_x g_\alpha + \inf_x g_\beta$ leaving behind $\rho = 2c - c(1/2-\delta) = c(3/2+\delta)$

Now summing over all $x \in \mathcal{S}$, which is the algorithm's guess of the minimizers or the internal reconstruction, the discrepancy measure has at least $2c\delta$ contribution from the $\Delta_\mathcal{H} (\alpha, \beta)$ terms. Thus, we have
\begin{equation*}
    \rho(g_\alpha, g_\beta, \mathcal{S} \given \theta_\alpha, \theta_\beta ) \geq \frac{2\delta c}{2^d} \Delta_\mathcal{H} (\alpha, \beta)
\end{equation*}
Using the above insights, we now evaluate $\Psi(\delta)$ as follows
\begin{equation}
\begin{split}
    \Psi(\delta) & = \min \limits_{\alpha \neq \beta \in \mathcal{G}} \inf \limits_{\mathcal{S}} \rho(g_\alpha, f_\beta)  \geq \frac{2\delta c}{2^d} \frac{2^d}{4}  = \frac{\delta c}{2} 
\end{split}
\end{equation}

Finally, we set $\epsilon = c\delta /18$ satisfying the requirement of $\epsilon \leq \Psi(\delta)/9 $ required to apply Lemma 2. We choose $c =1/8$, considering the minimum possible separation between two different minimizers, which results in $\epsilon = \delta/ 144$ and in this regime, the following holds
\begin{equation}
    1/3 \geq  1 - 2\frac{\ell T (1+2\delta)^2 + \log 2}{2^{d} \log (2/\sqrt{e})}
\end{equation}
Replacing $\delta = 144\epsilon$ and rearranging leads to 
\begin{equation}
    T \geq \Omega \left( \frac{2^d}{\ell \epsilon^2} \right) \\
\end{equation}

\end{proof}
Thus, we have proved that the reconstruction of a set that has a low error on the function subclass requires an exponential number of queries. As set reconstruction is equivalent to identification of all minimizers, it boils down to our claim that identification of global minimizer takes exponential time.

\section{Discussion}

\subsection{Comparison with non adaptive queries \cite{NY83, W89}}
\label{AppA1}
In this section, we define and differentiate between non adaptive evaluations considered in \cite{NY83, W89} and adaptive evaluations for which our result holds. 

\begin{definition}[Non adaptive queries] The queries $x_1, \dots, x_T$ on the domain $\mathcal{X}$ are fixed apriori for all functions $f \in \mathcal{F}$ and the oracle returns (noisy) function values $y_1, \dots, y_T$. Thus, we could denote the queries as a $x_1(\mathcal{F}), \dots, x_T(\mathcal{F})$. 
\end{definition}

The notation in the following Definition \ref{def:2} uses sequential representation for intuitive and clear optimization interpretation. However, the results are much more general (please see Lemma \ref{lemma:lower}) and apply to any adaptive method that include branch and bound, divide and conquer, dynamic programming, or any other conceivable method.

\begin{definition}[Adaptive (sequential) queries] The queries $x_1, \dots, x_T$ on the domain $\mathcal{X}$ for each function $g \in \mathcal{G}$ can be chosen as $
    \{x_1, y_1, \dots, x_{i-1}, y_{i-1}\} \rightarrow{} x_i $
and the oracle returns (noisy) function values $y_1, \dots, y_T$. Thus, we could denote the queries as  $x_1(g), \dots, x_T(g)$.
\label{def:2}
\end{definition}

The result for the non adaptive case is obtained by constructing a function approximation on $T$ non adaptive queries in the $\ell_2$ norm  \cite{NY83, W89}, where the evaluation points in the domain are fixed aprioiri for a given function class $\mathcal{F}$, that is, $x_1(\mathcal{F}), \dots, x_T(\mathcal{F})$. The bound in \cite{NY83, W89} is obtained by picking a single function $f \in \mathcal{F}$ and performing an $\inf$ over a pre defined grid of points in an algebraic manner. It is straightforward that for this lower bound to hold for the function class $\mathcal{F}$, the points chosen must remain the same. This is because picking a different set of points for a different function $f' \in \mathcal{F}$ would change the result. Thus, the non adaptive requirement is embedded in the result.  

On the other hand, our framework allows for each function $g \in \mathcal{G}$ to be queried on a different set of points $x_1(g), \dots, x_n(g)$ using the information gathered about $g$ from the oracle answers to previous queries. That is, we bound the maximum amount of information about a function $g \in \mathcal{G}$ that can be accumulated from a budget of $T$ queries using any conceivable first order algorithm. The \textit{adaptiveness} is captured in the evaluation of mutual information within our information theoretic framework (please see Lemma \ref{lemma:lower}) while only a non adaptive grid is considered in \cite{NY83, W89}. 

Finally, in the non adaptive scenario, there exists a bijection from the query points to an algorithm. However, in the adaptive scenario, such  a bijection from the query points to algorithms does not exist and requires information theoretic approaches such as this work.

\subsection{Comparison for $\epsilon$-accuracy \cite{NY83, W89}}
In this section, we show that our bounds are a) a lot tighter that those obtained for the non adaptive case and b) hold for any desired accuracy $\epsilon$. The prior result obtained for an $\epsilon$-accuracy with $T$ non adaptive queries is $T = \Omega(\epsilon^{-d})$ in \cite{NY83, W89} or equivalently $
\epsilon = \Omega(T^{-1/d}) $
while our Theorem \ref{the: t1} for $T$ adaptive queries is 
$
    \epsilon = \Omega(\sqrt{\frac{2^d}{T}})$
We have $\log_2\epsilon = d-\frac{1}{2} \log_2 T$ versus $\log_2\epsilon = -\frac{1}{d}\log_2T$. Given the above, ours is clearly a lot tighter. Furthermore, the bound in \cite{NY83, W89} is not increasing with respect to $d$ for $\epsilon \geq 1$
\label{AppA2}

\subsection{Comparison with local minimax \cite{Singh18}}
\label{AppA3}
While both \cite{Singh18} and ours are based on adaptive function evaluations, the former proposes a local minimax framework while we propose a global minimax framework. The local minimax framework considers a function class that is similar to a reference function class. They show that the bounds under the above assumption are similar to those of convex optimization. In contrast, we do not incorporate such an assumption as we are interested in establishing the fundamental hardness of non convex optimization. Our result complements the above as the assumption of requiring a reference function class with full knowledge that is similar to the one being optimized over is difficult to incorporate apriori. To make this more concrete, we present the local minimax formulation below. 

\begin{lemma}
\cite{Singh18} For a true underlying function $f \in \Theta'$, a reference function $f_0 \in \Theta$ and a constant $C_1$, with $\epsilon_n(f_0)$ depicting the range in which the local minimax rates apply to be of the same order as convergence rate $R_n(f_0)$, the local minimax estimator $\widehat{x}_n$ that has complete knowledge of the reference function $f_0$ is defined as 
\[
    \inf \limits_{\widehat{x}_n} \sup \limits_{f \in \Theta', \| f- f_0\|_\infty \leq \epsilon_n(f_0)} \Pr\limits_{f} [\mathfrak{L}(\widehat{x}_n; f) \geq C_1 R_n]
\]
where 
$
    \mathfrak{L}(\widehat{x}_n; f) = f(x) - \inf \limits_x f(x) 
$
\end{lemma}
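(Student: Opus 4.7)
The plan is to treat the final statement as a formalization of the local minimax construction of Singh18 and to verify the two substantive pieces it asserts: first, that the inf--sup expression is well-posed over the $\epsilon_n(f_0)$-neighborhood of the reference $f_0$; and second, that the displayed probability lower bound is attained with $R_n$ matching the convex-optimization rate. Well-posedness is standard once $\Theta'$ carries a mild topology, so I would only flag it and move on to the quantitative content.

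For the lower bound I would use Le Cam's two-point method. I would fix a perturbation $f_1 \in \Theta'$ with $\|f_1-f_0\|_\infty \leq \epsilon_n(f_0)$ constructed so that $\argmin f_1$ is displaced from $\argmin f_0$ by a distance producing a loss gap $\mathfrak{L}(\widehat{x}_n;f)$ of order $C_1 R_n$ for any single estimator $\widehat{x}_n$. Concretely, add a narrow dip of depth $\epsilon_n(f_0)$ to $f_0$ at a location away from its original minimizer. Next I would upper bound the KL divergence between oracle outputs under $f_0$ and $f_1$ across $n$ adaptive queries: $\ell_\infty$-closeness gives per-query KL of order $\epsilon_n(f_0)^2$, so the total KL grows like $n\,\epsilon_n(f_0)^2$. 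Le Cam's inequality then yields that any estimator must fail on at least one of $\{f_0, f_1\}$ with constant probability whenever that KL is bounded, which upon taking the supremum over the $\epsilon_n$-ball delivers the stated probability lower bound.

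The main obstacle will be calibrating $\epsilon_n(f_0)$. Too small and the two hypotheses produce nearly coincident minimizers, erasing the loss gap; too large and one leaves the locally convex regime in which $R_n$ is actually the convex rate. The standard remedy is the modulus-of-continuity calibration of Donoho--Liu / Cai--Low: choose $\epsilon_n(f_0)$ so that the admissible perturbation exactly saturates the loss gap of order $R_n$. For parametric problems this gives $\epsilon_n(f_0) \asymp n^{-1/2}$ and a matching loss gap. I would invoke this modulus formalism rather than recompute the constants, since that is the device by which Singh18 anchors $\epsilon_n(f_0)$ to $R_n(f_0)$ in the first place.

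Finally, I would use this step to make the contrast with our Theorem \ref{the: t1} explicit: the probability lower bound here is conditioned on complete knowledge of $f_0$, which restricts competition to an $\epsilon_n$-ball inside which the problem is effectively convex and therefore admits $R_n$-type rates. Removing that side information is exactly the step that forces a genuinely global packing (our set $\mathcal{V}$ of Hamming-separated configurations), and recovers the $\sqrt{2^d/(T\ell)}$ hardness. Thus the local minimax formalism complements rather than contradicts our global bound, and I would frame the ``proof'' of this lemma as a Le Cam reduction whose scope is deliberately narrower than the Fano-based argument used for the main theorem.
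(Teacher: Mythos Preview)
The paper provides no proof of this lemma: it is stated as a citation of \cite{Singh18} and is used purely for expository comparison in Section~\ref{AppA3}. More to the point, the lemma as written is a \emph{definition}, not a theorem---it says ``the local minimax estimator $\widehat{x}_n$ \ldots\ is \emph{defined as}'' the displayed inf--sup expression. There is no probability lower bound being asserted, no claim that this quantity equals or exceeds anything; the expression simply \emph{is} the local minimax risk in the sense of \cite{Singh18}.

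Your proposal therefore targets the wrong object. You are sketching a Le Cam two-point argument to establish that the local minimax rate matches the convex rate $R_n$, which is a substantive result of \cite{Singh18} but is not what this lemma states. The calibration discussion (modulus of continuity, Donoho--Liu) and the KL bound over adaptive queries are reasonable ingredients for reproducing that result, but none of that is called for here: the paper merely quotes the definition and then remarks that the supremum is over an $\epsilon_n(f_0)$-ball rather than all of $\mathcal{F}$, which is the only point being made. Your final paragraph, contrasting the local formulation with the global packing argument behind Theorem~\ref{the: t1}, is in fact the entire content of the paper's discussion surrounding this lemma, and would be the appropriate thing to retain if you were writing this section.
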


The local minimax result above considers a supremum only over functions $f$ that are $\epsilon_n(f_0)$ - away from a reference function $f_0$, i.e., $\|f - f_0\|_\infty \leq \epsilon_n(f_0)$.
Our global minimax result considers a supremum over all functions $f \in \mathcal{F}$.
This is arguably a more challenging scenario since it considers a supremum over a bigger set.

\section{Conclusion}
We provide lower bounds for non convex optimization for up to first order. Importantly, we note that there is no restriction on the queries being sequential or related in any manner and the function is not required to be differentiable. Thus, we obtain algorithm agnostic lower bounds. Our future work includes exploring better rates with access to second order information or imposing the function class to be differentiable. On the other hand, situations with constraints on queries to be sequential in nature or using distributed optimization provide interesting avenues to understand if they lead to worse rates and by what factors.

\bibliography{references}

\newpage
\appendix
\onecolumn

\begin{center}
\textbf{\Large Supplementary Material: First Order Methods take Exponential Time to Converge to Global Minimizers of Non-Convex Functions}
\end{center}

\section{Detailed Proofs}
\label{sec: AppC}
\subsection{Proof of Lemma \ref{lemma:cond}}
\label{AppC1}
\begin{proof}
From Equations \ref{eq:rho} and \ref{eq: Psi} and given that there is an $\mathcal{S}$ and $\alpha$ such that $\sum_\mathcal{S}g_\alpha - \inf \limits_x g_\alpha \leq \Psi(\delta)/3$, we have for any $\beta \neq \alpha$, 
\begin{align}
    \nonumber
    \Psi (\delta) \leq \rho(g_\alpha, g_\beta, \mathcal{S} \given \theta_\alpha, \theta_\beta) &= \sum \limits_{x \in \mathcal{S}} g_\alpha(x \given \theta_\beta) - \inf \limits_{x} g_\alpha(x \given \theta_\alpha) + g_\beta(x \given \theta_\beta) - \inf \limits_{x} g_\beta(x \given \theta_\beta) \\
    &\begin{aligned}
        \nonumber
        &\leq \sum \limits_{x \in \mathcal{S}} (g_\alpha(x \given \theta_\alpha) - \inf \limits_{x} g_\alpha(x \given \theta_\alpha)) +  \sum \limits_{x \in \mathcal{S}} (g_\beta(x \given \theta_\beta) - \inf \limits_{x} g_\beta(x \given \theta_\beta)) 
    \end{aligned} \\
    &\begin{aligned}
     &\leq \Psi(\delta)/3 +    \sum \limits_{x \in \mathcal{S}} (g_\beta(x \given \theta_\beta) - \inf \limits_{x} g_\beta(x\given \theta_\beta))
     \end{aligned}
\end{align}
Rearranging, we have, 
\begin{equation}
\sum \limits_{x \in \mathcal{S}} (g_\beta(x \given \theta_\beta) - \inf \limits_{x} g_\beta(x\given \theta_\beta)) \geq 2\Psi(\delta)/3 
\end{equation}
\end{proof}

\subsection{ Proof of Lemma \ref{lemma:upper}}
\label{AppC2}
\begin{proof}
We build an estimator of the true $\alpha^*$, denoted as $\widehat{\alpha}$. If  there exists an $\alpha \in \mathcal{V}$ such that
\begin{align*}
    \sum_{x \in \mathcal{S}_T} g_\alpha(x \given \theta_\alpha) - \inf g_\alpha(x \given \theta_\alpha) \leq \Psi(\delta)/3 
    \end{align*}
then we assign $\widehat{\alpha}(\mathcal{S}_T)$ to $\alpha$. In any other case, we pick $\widehat{\alpha}$ at random. From the above hypothesis, we have that
\begin{align*}
   \mathbb{P}[\alpha \neq \widehat{\alpha} \given \Theta] &\leq  \mathbb{P}_\Phi[\sum_{x \in S_T}g_\alpha(x \given \theta_\alpha) - \inf_{x} g_\alpha(x \given \theta_\alpha) \geq \Psi(\delta)/3] \leq 1/3
\end{align*}
    where the upper bound is by the Markov's inequality under the expected error assumption. As this is true for any $\theta$, maximizing over $\alpha$ completes the proof.   
\end{proof}
 
  \begin{figure}[ht!]
    \centering
    \includegraphics[width=0.25\textwidth]{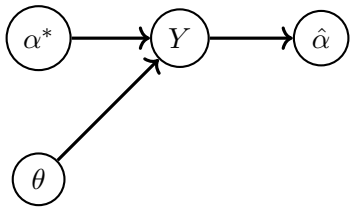}
       \caption{The estimation problem, represented as a Bayesian network}
        \label{fig:Ghoshal}
  \end{figure}

\subsection{Result on Fano's with conditioning \cite{Ghoshal}}
\label{AppC3}
The result stems from extending Fano's inequality to a slightly more general scenario as shown in Figure $\ref{fig:Ghoshal}$. Adopting it to our setting involves estimating $\alpha^*$ from answers returned by the oracle $ Y = \{\mathcal{I}_{x_{1:T}}\} = \phi(x^1:x^T; g_\alpha^* \given \theta)$

\begin{lemma}
\label{lemma: Ghoshal}
\cite{Ghoshal} Let $\alpha \in \mathcal{V}$ and $\theta$ be random variables and let $\widehat{\alpha}$ be any estimator of $\alpha^*$ obtained from samples $Y$. If the random variables $\alpha$ and $\theta$ are independent, then 
\begin{align}
\label{eq: Ghoshal}
    \sup \limits_{\mathcal{P}_\Theta} \sum_{\alpha \in \mathcal{V}} \int_{\theta_0} Pr\{\alpha \neq \widehat{\alpha} \given \theta = \theta_0\} \mathcal{P}_\Theta(\theta_0)\mathcal{P}_{\mathcal{\alpha}}(\alpha) \geq 1 - \frac{\sup_{\theta_0 \in \Theta} I(\alpha^*; Y \given \theta = \theta_0) + \log 2}{\log |\mathcal{V}|}
\end{align}
\end{lemma}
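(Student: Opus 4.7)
The plan is to reduce this conditional Fano-type inequality to the classical (unconditional) Fano inequality by fixing $\theta = \theta_0$ pointwise and then integrating. The key structural facts are (i) $\alpha \perp \theta$, so conditioning on any $\theta = \theta_0$ leaves $\alpha$ uniform on $\mathcal{V}$; and (ii) $\widehat{\alpha}$ is a function of $Y$ alone, so the Markov chain $\alpha \to Y \to \widehat{\alpha}$ persists under the conditioning, which is what lets us invoke the standard data processing argument behind Fano.

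First I would fix an arbitrary $\theta_0 \in \Theta$ and apply the classical Fano inequality to the conditional distribution $\mathbb{P}_{\alpha, Y, \widehat{\alpha} \given \theta = \theta_0}$. Because $\alpha$ is uniform on $\mathcal{V}$ both marginally and conditionally (by independence), $H(\alpha \given \theta = \theta_0) = \log |\mathcal{V}|$. Combining $H(\alpha \given \widehat{\alpha}, \theta = \theta_0) \geq \log|\mathcal{V}| - I(\alpha;\widehat{\alpha} \given \theta = \theta_0)$ with the conditional data processing inequality $I(\alpha;\widehat{\alpha} \given \theta = \theta_0) \leq I(\alpha; Y \given \theta = \theta_0)$ and the standard Fano bound $H(\alpha \given \widehat{\alpha}, \theta = \theta_0) \leq \log 2 + \Pr\{\widehat{\alpha} \neq \alpha \given \theta = \theta_0\} \log |\mathcal{V}|$ yields the pointwise bound
\begin{equation*}
\Pr\{\widehat{\alpha} \neq \alpha \given \theta = \theta_0\} \geq 1 - \frac{I(\alpha; Y \given \theta = \theta_0) + \log 2}{\log |\mathcal{V}|}.
\end{equation*}

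Next I would replace the RHS by the $\theta_0$-free upper bound $\sup_{\theta_0 \in \Theta} I(\alpha^*; Y \given \theta = \theta_0)$, which can only decrease the RHS and so preserves the inequality. At this point the bound holds for every $\theta_0$, so I can multiply through by $\mathcal{P}_\Theta(\theta_0) \mathcal{P}_\alpha(\alpha)$ and integrate over $\theta_0$ and sum over $\alpha$; since $\sum_\alpha \mathcal{P}_\alpha(\alpha) \int \mathcal{P}_\Theta(\theta_0)\,d\theta_0 = 1$, the (constant) RHS is unchanged while the LHS becomes exactly the weighted error expression in the statement, valid for every choice of $\mathcal{P}_\Theta$. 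Taking $\sup_{\mathcal{P}_\Theta}$ on the left only strengthens the inequality.

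The main obstacle — and really the only subtle point — is justifying that the conditional Markov chain $\alpha \to Y \to \widehat{\alpha}$ is well posed under $\theta = \theta_0$: one must use that the estimator is a (possibly randomized) function of $Y$ and not of $\theta$, so that conditioning on $\theta$ does not break the data processing step. Independence of $\alpha$ and $\theta$ is essential here, since it guarantees both the uniformity of $\alpha$ after conditioning and the validity of treating $\theta$ as an exogenous latent variable that only affects the channel $\alpha \to Y$. Everything else is a routine conditioning-and-integration argument, and the final $\sup$ over $\mathcal{P}_\Theta$ is a free strengthening because the RHS does not depend on $\mathcal{P}_\Theta$.
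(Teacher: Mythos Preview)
The paper does not actually prove this lemma: it is stated as a cited result from \cite{Ghoshal}, accompanied only by the remark that it ``stems from extending Fano's inequality to a slightly more general scenario'' and the Bayesian network diagram. There is therefore no in-paper proof to compare against.

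That said, your argument is correct and is exactly the standard route for such conditional Fano bounds: freeze the latent variable $\theta = \theta_0$, observe that independence of $\alpha$ and $\theta$ keeps $\alpha$ uniform so that $H(\alpha \mid \theta=\theta_0)=\log|\mathcal{V}|$, apply the ordinary Fano/data-processing chain to the conditional model, bound the conditional mutual information by its supremum over $\theta_0$, and then average over $\mathcal{P}_\Theta$. One small notational wrinkle: the pointwise Fano bound you write already averages over $\alpha$ (that is what makes $H(\alpha\mid\theta=\theta_0)=\log|\mathcal V|$ usable), so the explicit $\sum_{\alpha}\mathcal P_\alpha(\alpha)$ in the lemma's left-hand side should be read as making that average explicit rather than as an additional summation you perform on top of $\Pr\{\widehat\alpha\neq\alpha\mid\theta=\theta_0\}$. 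With that reading your integration step is fine, and the final $\sup_{\mathcal P_\Theta}$ is indeed free since the right-hand side is independent of $\mathcal P_\Theta$.
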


Thus, it is sufficient for us to show the RHS, that is, Fano's inequality in the conditioned case with the supremum applied over $\theta_0$ while estimating the mutual information which we derive in Lemma \ref{lemma:lower}. We note that the algorithm uses $Y$ to internally reconstruct the set $\mathcal{S}_T$.

\end{document}